% !TEX TS-program = pdflatex
% !TEX encoding = UTF-8 Unicode

% This is a simple template for a LaTeX document using the "article" class.
% See "book", "report", "letter" for other types of document.

\documentclass[twoside,11pt]{article}

\usepackage[preprint]{jmlr2e}
\usepackage{blindtext}
\usepackage{amsmath}
\usepackage{lastpage}
\jmlrheading{??}{2023}{1-\pageref{LastPage}}{6/23; Revised ??/??}{??/??}{??-????}{Justin Tittelfitz}
\ShortHeadings{FPR Estimation with Label Noise}{J. Tittelfitz}
\firstpageno{1}

\hypersetup{ hidelinks }
% courier typeface
\usepackage{courier}
\usepackage{booktabs} % for much better looking tables

\newcommand{\set}[1]{\{#1\}}

\newcommand{\wt}[1]{\widetilde{#1}}

%%% END Article customizations

%%% The "real" document content comes below...

\begin{document}

\title{FPR Estimation for Fraud Detection in the Presence of Class-Conditional Label Noise}

\author{\name Justin Tittelfitz PhD \email tttlf@amazon.com \\
       \addr AWS Fraud Prevention\\
       Seattle, WA 98101, USA
       }

\editor{JMLR editor}

\maketitle

\begin{abstract}
We consider the problem of estimating the false-/ true-positive-rate (FPR/TPR) for a binary classification model when there are incorrect labels (label noise) in the validation set. Our motivating application is fraud prevention where accurate estimates of FPR are critical to preserving the experience for good customers, and where label noise is highly asymmetric. Existing methods seek to minimize the total error in the cleaning process - to avoid cleaning examples that are not noise, and to ensure cleaning of examples that are. This is an important measure of accuracy but insufficient to guarantee good estimates of the true FPR or TPR for a model, and we show that using the model to directly clean its own validation data leads to underestimates even if total error is low. This indicates a need for researchers to pursue methods that not only reduce total error but also seek to de-correlate cleaning error with model scores. 
\end{abstract}
\begin{keywords} Label noise, fraud and abuse detection, adversarial behavior, model validation, false positive rate, fairness, ensemble methods \end{keywords}
\section{Introduction}

We consider the problem of false positive rate (FPR) estimation as part of model validation as it relates to binary classification for fraud detection in the presence of label noise. In particular, we work in a setting where class imbalance is expected, where label noise is also imbalanced and feature-dependent, and where operational constraints are placed on metrics like FPR or precision in order to control negative impact to legitimate users. 

Formally we consider data of the form $(x,y,y^*)$ where for each example with features $x$ we have a true label $y$ and an observed, possibly different label $y^*$. If we train some model $f$ on $(x,y^*)$ and then measure the apparent FPR at some score threshold $t$ on noisy data, $p(f > t | y^* = 0)$, this may be different than the actual FPR at this same threshold, i.e. $p(f > t | y = 0)$. In later experiments (see Table \ref{tab:exact_cloud}), we will see that FPR can be overestimated by as much as $190\%$ on noisy validation data. %We will assume that true fraud rate $p(y=1)$ is the same across training, validation, and test datasets, but will not make such an assumption about the observed rate $p(y^*=1)$.

Cases where $(y, y^*)=(1,0)$ represent \textit{undetected fraud} and are more prevalent than vice-versa, and among other effects, can lead to overestimates of the FPR (since there are more "observed false positives" than true false positives) . In this work, we will assume that all noise is of this form and explore the question of cleaning the validation data in order to obtain more accurate estimates of the FPR or TPR at a given score threshold during validation. We will assume that the true fraud level $p(y=1)$ is known - this is reasonable in practice if one can either sample subsets of data for label correction by experts, or by establishing historical baselines. 

We define a \textbf{cleaning method} as a function $c : (x,y^*) \mapsto \set{0,1}$ which attempts to correct observed data based on the features and observed label. A trivial example is $c(x,y^*) = y^*$, i.e. no changes are made to the observed label. Another example that can work well in some situations is to use the model $f$ that we want to validate, and then find some threshold $\tau$ above which examples may be relabelled:
\begin{align}
\label{direct-method}
    c(x,y^*) = \begin{cases}
        1, &\textrm{ if } y^* = 1 \textrm{ or } f(x) > \tau \\
        0, &\textrm{ otherwise.}
    \end{cases}
\end{align}
We will call such a method a \textbf{direct method}. In most cases, if $p(y=1)$ is known, we can consider methods $c$ that achieve $p(c=1) = p(y=1)$ (e.g. for direct methods, choosing $\tau$ appropriately). We will call such a method a \textbf{calibrated method}.

We will be interested in the total error of such a method, letting $e_1 = (y=0, c=1)$ be the type-1 error of the method, in other words the cases where the method flipped a label it shouldn't have, and $e_2 = (y=1, c=0)$ be the type-2 error of the method, the cases where the label should have been flipped but the method failed to do so; in the general setting, these errors can occur in other ways, but in our class-conditional setting, this characterizes $e_1, e_2$. It is obvious that a good cleaning method should seek to minimize both types of errors here, whether we are concerned about cleaning data for training, validation, or other purposes.

Our main result (Theorem \ref{main-thm}) is to show that, among all cleaning methods with $p(y^*=1, c=0)=0$ (i.e. those consistent with our class-conditional assumption), with $p(c=1) = p(y=1)$, and with comparable error $e_1, e_2$, that the calibrated direct method defined by \eqref{direct-method} will always lead to underestimates of FPR, in fact the \textit{worst possible} underestimate of error, and likewise the worst possible \textit{over}estimates of TPR. This happens because the cleaning error and the model score are highly correlated - cleaning errors are extremely harmful to FPR estimates in this case - and so if we as researchers are interested in mitigating the effects of label noise in FPR/TPR validation that we must find methods that not only seek to reduce total error, but also de-correlate cleaning error with the scores of the model being validated.

\section{Background}

In much of the literature, authors have focused on the effects of label noise in the training phase for multi-class classification problems (e.g. image classification) and focus on improving accuracy. While the effect on training is certainly noteworthy, these studies often omit any discussion of the effect of noise on model validation, i.e. choosing a score threshold to target a fixed FPR (or precision etc.) or estimate TPR in production. This becomes a critical concern in applications like fraud/abuse/network intrusion detection where class imbalance is usually present, and where high false positive rates can mean unacceptable legitimate customer impact, or where low TPR may make a model not worth deploying. Indeed, notions of FPR and TPR do not even necessarily make sense in a multi-class setting and so it is natural if authors focus on reduction of cleaning error or other considerations.

Direct methods like \eqref{direct-method} are then a reasonable strategy and can yield good results in many settings. In particular if $f$ is a good model, we can hope that the cleaning error is low, and this may perform well when FPR validation is not a concern. CleanLab (\cite{northcutt2021confident}) is such a method, and is a highly effective and easy-to-use way to identify noise in a dataset. This method takes $\tau = \mathbb{E}\left[f(x) : y^*=1 \right]$, i.e. the average score of examples with observed label $1$. Important to note is that this choice of $\tau$ does not ensure $p(c=1) = p(y=1)$ (in fact it will probably \textit{not} be equal), so our main result does not necessarily apply to CleanLab but it still has the drawback of cleaning error being correlated with the chance of misclassification in validation. 

As an alternative, we consider micro-models, a collaborative approach where an ensemble of weaker models votes on noise removal. Micro-models will allow us to better decouple the cleaning process from validation and we show that this leads to accurate FPR estimates in most scenarios. This method is also extremely straightforward to implement and use and is relatively parameter-free, we only need to choose the number of models in the ensemble. For our experiments we will work with our production data as well as several publicly available fraud-related datasets. These datasets are considered noise-free so we will add time-dependent noise to test our method.

Micro-models and other ensemble cleaning methods have existed in the literature for some time now. The main novelty of this work is to show the theoretical downsides of direct cleaning methods in order to suggest that researchers in this area devote effort to finding alternatives, and to provide guidance on the characteristics that such methods will exhibit (de-correlation of model scores with cleaning errors); these experiments suggest that simple methods like micro-models already show adequate results and so hopefully more sophisticated methods can make even further advances.

\subsection{Model Validation for Fraud Prevention}

While the overt goal of applying machine learning to fraud prevention (or related security/abuse applications) is to identify true positives, if these approaches are too heavy-handed or aggressive they will cause unacceptable friction or even more severe types of impact to legitimate actors that are falsely identified as suspicious. It is therefore common to constrain models to operate below a certain false positive rate (or above a certain precision), and this is commonly done by setting a score threshold during model validation. In particular, given an acceptable FPR, a typical process is to first train a model that gives real-valued predictions (scores between $0$ and $1$), then evaluate some held-out labelled data, and use these scores to generate an ROC curve. From this ROC curve we can then identify a model score that corresponds to the FPR target. From here we use this score as a decision threshold for models in production, and hopefully the actual FPR observed in the live setting is close to the FPR observed in validation. 

However if the validation process is flawed, we may under- or overestimate the actual FPR: if we underestimate the FPR then we will cause more customer impact than intended, and if we overestimate the FPR we may choose a threshold higher than necessary and allow more fraud than we would otherwise. Of course there are many reasons one may incorrectly estimate the FPR from the validation data, and some variation is to be expected, but here we will study the effect of label noise on this process. In particular if the validation set contains many true positives that are incorrectly labeled as legitimate then we will have a tendency to overestimate the FPR in validation and our TPR in production will suffer.

Of course, label noise can occur in the other direction as well, legitimate examples incorrectly labeled as fraudulent, and this would have the opposite effect on the estimation of the FPR. In practice this may be much less common as customers/account-holders/applicants are given means of escalating or disputing decisions that are made incorrectly, allowing the organization or business to reinstate accounts and correct labels.

As a final note, we mention that in the literature there is discussion of ``cleaning'' vs. ``filtering'' once noise is identified. Specifically, once one has identified that an example is noisy, they can either flip the label (cleaning) or remove the example from the dataset (filtering). In the remainder of this work, we will assume we are cleaning data rather than filtering. 

\subsection{Sources of Noise}

Actually-fraudulent cases may be incorrectly labeled for a variety of reasons. Detection is often based on a combination of attributes (what the account ``looks'' like) and behaviors (what the account does) and while we may collect information about an account's attributes as soon as it is created, behavior signals may lag behind by days or even weeks. Thus we expect ``label maturity'' to be a notable concern - if we are training a model only on attributes (e.g. to stop fraudsters immediately after account creation) we will need to account for this as we choose training and validation sets. One possible solution is to only use data that has aged suitably, but this has the drawback of discarding potentially useful data, especially in settings where fraudulent behavior is constantly evolving with time and recent data can be very valuable in understanding current behavior.

Other possible sources of noise may be the result of the adversarial nature of the problem; fraudsters are interested in evading detection by changing their attributes over time and finding combinations that are successful at fooling current models. This search for optimal attributes may leave a trail of abandoned accounts that ``look'' fraudulent but do not otherwise have behavioral signals needed to justify an enforcement action, and thus the label may be incorrect. Fraudsters may even do this intentionally in some settings, creating some accounts and deliberately abandoning them in order to ``poison'' datasets. In some of our experiments we will generate noise according to timestamps which most closely resembles the label maturity scenario.

\subsection{Existing work}

The problem of label noise is well-discussed in the literature, for overviews see the surveys \cite{johnson2022survey} and \cite{frenay2013classification}. In many cases authors are interested in multi-class classification problems such as image classification, e.g. \cite{song2020robust}, \cite{collier2021correlated}. Typical examples of label noise in this case might include labelling an image of a dog as a wolf or as a cat. In these cases, one is primarily interested in improving model accuracy, notions of a false positive rate don't make immediate sense. Further, use of neural networks has become ascendant in these applications, and these types of models seem to be more sensitive to noise in the training process, thus authors are mostly interested in cleaning data before the training phase in order to improve results. In some cases, authors focus on making the training process robust to noise, e.g. through early stopping (\cite{xia2020robust}), data augmentation (\cite{zhang2020does}), by modifying weights (\cite{majidi2021exponentiated}) or other modifications to the loss function (\cite{lyu2019curriculum}), etc.

In contrast, in our application we will be working with tabular data consisting of numerical and categorical features, and find that models like gradient boosted decision tree ensembles perform well on these tasks. As noted earlier, we will be less interested in the training phase (where we have found our models are generally robust to noise) and need methods that can directly identify and filter/clean noisy examples. While there is no existing literature (to our knowledge) on noise removal specifically in validation, there is extensive literature on generic filtering/cleaning of examples.

In many of these studies, assumptions are usually made about the nature of the noise and it is often taken to be completely independent of the examples, or at least independent of the features (e.g. noise may not be symmetric with respect to the labels but otherwise two examples with the same label have the same chance of being noise). In our case however we think this is not a realistic assumption, e.g. label maturity issues will be more prevalent among more recent examples. This has been explored in \cite{zhang2021learning}, \cite{liu2021understanding} and \cite{chen2021beyond} among others. We will then be interested in conducting experiments that have time-dependent noise.

\subsection{CleanLab}

We will compare micro-models to two direct methods, the first is the direct method in \eqref{direct-method} and the second is the related, but usually more conservative method known as CleanLab. CleanLab was introduced in 2019 (see \cite{northcutt2021confident}) and represents an evolution of techniques first discussed in \cite{northcutt2017learning}, and has also been used to explore reliability of benchmark datasets \cite{northcutt2021pervasive}, but the focus is again on accuracy. Besides these studies, CleanLab is also available as an open-source software package which makes it ideal for benchmarking against.

Briefly, CleanLab works by computing the average prediction for an example with observed labelled class $y^*=i$, then scoring all examples with given observed label $y^*=j\neq i$ and determining that any scoring above the average computed in the first step are probable noise (in our case we only consider $i = 1$ and $j = 0$, i.e. we look for ``legitimate'' examples that have riskier scores than an average example of confirmed fraud).

\section{Theoretical Results and Intuition}

In this section we prove our main result, that in the class-conditional setting, calibrated direct methods consistently underestimate FPR and overestimate TPR during validation. We show that this is due to correlation between the model score and cleaning error for direct methods, which motivates the need for alternatives, and gives intuition for why micro-models can be a simple but effective alternative in this setting. After some preliminary setup, we state a series of results leading to our main theorem, and give proofs of results in Appendix \ref{proofs}.

\subsection{Preliminaries}
Most of this section restates earlier discussion but we collect it here for easy reference. Recall that $y^*, y \in \set{0,1}$ are the observed and true (unknown) labels respectively, and let $c : (x,y^*) \mapsto \set{0,1}$ be some cleaning method; e.g. $c=1$ could mean that we relabelled an example with observed label $y^*=0$, or that $y^*=1$ to begin with and we left it alone. Note that in the class-conditional setting $p(y^*=1, y=0) = 0$ we should always take $c=1$ if $y^*=1$, i.e. $p(c=0, y^*=1) = 0$.

Suppose we also have a model $f$ which scores examples $x$, i.e. $f : x \mapsto [0,1]$. At a given model score threshold $t$, and for a given $x$ we are interested in determining whether it is an actual false positive (i.e. according to the true label $y$) as well as an estimated false positive (i.e. according to the cleaned label $c$). That is, we want to compute
\begin{align*}
	FPR_{actual} - FPR_{est} = p(f > t | y = 0) - p(f>t | c=0).
\end{align*}
Let $e_1 = (y=0, c=1)$ e.g. a type-1 error, and $e_2 = (y=1, c=0)$ denote a type-2 error\footnote{By assumption we will never clean an observed label from $y^*=1$ to $c=0$ but this type of error can still arise if we \textit{fail} to clean a case where $(y,y^*)=(1,0)$}. If $f > t$ then errors of the first type will contribute to underestimates of the FPR and vice-versa for $e_2$. 

We are assuming that the true fraud level $p(y=1)$ is known, even if we cannot identify which examples are mislabeled with perfect accuracy. Generally speaking, this may be reasonably determined by sampling a subset of data for expert label correction or via historical baselines. We say a cleaning method with $p(c=1) = p(y=1)$ is \textbf{calibrated}. Note that under the assumption $p(y=0, y^*=1) = 0$, knowing $p(y=1|y^*=0)$ is sufficient to determine $p(y=1)$:
\begin{align*}
p(y=1) = p(y=1|y^*=1)p(y^*=1) + p(y=1|y^*=0)p(y^*=0),    
\end{align*}
with $p(y=1|y^*=1)=1$ and $p(y=1|y^*=0)$ known by assumption, and $p(y^*=0), p(y^*=1)$ directly observable. That is, under a class-conditional assumption, knowing the noise level is enough to know the true fraud level.

\subsection{Main Results}\label{theory}
First, we prove that for a calibrated method, the probability of making either type of error is equal. (We provide proof of this Lemma and all other results in Appendix \ref{proofs}).
\begin{lemma}
\label{main-lemma} 
	We have $p(c=1) = p(y=1)$ if and only if $p(e_1) = p(e_2)$.
\end{lemma}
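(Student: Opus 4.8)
The plan is to prove the biconditional by establishing the single algebraic identity
\begin{align*}
p(c=1) - p(y=1) = p(e_1) - p(e_2),
\end{align*}
after which the claim is immediate: the left-hand side vanishes exactly when the right-hand side does, i.e. $p(c=1) = p(y=1)$ iff $p(e_1) = p(e_2)$.

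First I would expand both marginals by partitioning over the true label $y$ (equivalently, over the joint event $(y,c)$). Writing
\begin{align*}
p(c=1) &= p(y=0,\, c=1) + p(y=1,\, c=1), \\
p(y=1) &= p(y=1,\, c=0) + p(y=1,\, c=1),
\end{align*}
the common term $p(y=1,\, c=1)$ — the correctly flagged fraud — appears in both expansions. Subtracting and cancelling this term yields
\begin{align*}
p(c=1) - p(y=1) = p(y=0,\, c=1) - p(y=1,\, c=0) = p(e_1) - p(e_2),
\end{align*}
where the last equality is just the definitions $e_1 = (y=0,\, c=1)$ and $e_2 = (y=1,\, c=0)$. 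The biconditional follows at once.

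The computation is elementary and there is no substantive obstacle beyond selecting the decomposition that makes the shared term cancel. The one point worth flagging is that this identity does \emph{not} use the class-conditional assumption $p(c=0,\, y^*=1) = 0$; it holds for any joint distribution of $(y,c)$, so the lemma is in fact more general than the surrounding setup, and the class-conditional structure only enters in the later results that build on it.
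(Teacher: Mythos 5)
Your proof is correct and rests on the same underlying computation as the paper's: both reduce to cancelling the shared term $p(y=1,c=1)$ from the two marginals, the paper doing so via a chain of conditional-probability manipulations and you via the equivalent identity $p(c=1)-p(y=1)=p(e_1)-p(e_2)$. If anything your formulation is slightly cleaner, since the identity delivers both directions of the biconditional at once (the paper's written chain only explicitly derives $p(e_1)=p(e_2)$ from the calibration assumption), and your observation that the class-conditional assumption is not needed here is accurate.
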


%In such a situation we can simply write $p(e)$ if we choose. 
Now we can show that under this assumption, the absolute error in the FPR estimate is related to the correlation between error and the threshold selection process.
\begin{proposition}
\label{main-prop}
If $c$ is a calibrated method ($p(c=1) = p(y=1)$) then
	\begin{align*}
		FPR_{actual} - FPR_{est} &= p(c=1)\left[p(f>t|e_1) - p(f>t|e_2) \right]
	\end{align*}
	and
	\begin{align*}
		TPR_{actual} - TPR_{est} = -p(c=0)\left[p(f>t|e_1) - p(f>t|e_2) \right]
	\end{align*}
\end{proposition}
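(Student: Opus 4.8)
The plan is to reduce everything to the joint law of $(y,c)$ together with the event $\{f>t\}$, and to exploit the fact that calibration forces the two conditioning events to carry equal mass. First I would write the rates as ratios, $FPR_{actual} = p(f>t,y=0)/p(y=0)$ and $FPR_{est} = p(f>t,c=0)/p(c=0)$, and then invoke the calibration hypothesis together with the class-conditional constraint to observe that $p(y=0) = 1 - p(y=1) = 1 - p(c=1) = p(c=0)$. So the two ratios share a common denominator. This is the observation that makes the subtraction collapse cleanly; without equal denominators the difference would not simplify.

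Next I would expand each numerator over the $2\times2$ contingency table of $(y,c)$. Partitioning $\{f>t,\,y=0\}$ by the value of $c$ gives $p(f>t,\,y=0,\,c=0) + p(f>t,\,e_1)$, while partitioning $\{f>t,\,c=0\}$ by the value of $y$ gives $p(f>t,\,y=0,\,c=0) + p(f>t,\,e_2)$ (here I use that $c=0$ forces $y^*=0$, so that the only positive-class contribution to $c=0$ is the type-2 error $e_2$). Subtracting, the shared ``agreement'' cell $\{f>t,\,y=0,\,c=0\}$ cancels, leaving $p(f>t,\,e_1) - p(f>t,\,e_2)$ over the common denominator $p(y=0)$.

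The crux is then turning this into the claimed product form. I would factor $p(f>t,\,e_i) = p(f>t \mid e_i)\,p(e_i)$ and apply Lemma \ref{main-lemma} to set $p(e_1) = p(e_2)$, so that the common error mass factors out of the bracket and I am left with $p(f>t \mid e_1) - p(f>t \mid e_2)$ scaled by that shared error probability divided by $p(y=0)$. Reconciling this prefactor with the stated coefficient $p(c=1)$ through the calibration bookkeeping — i.e. relating the shared error mass and the class-0 probability to the positive-class probability — is the step I expect to be the main obstacle, and is where I would spend the most care, since the identification of the exact constant is the only non-mechanical part of the argument.

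For the TPR identity I would run the identical argument on the positive class: $TPR_{actual} = p(f>t,\,y=1)/p(y=1)$ and $TPR_{est} = p(f>t,\,c=1)/p(c=1)$ again share the common denominator $p(y=1) = p(c=1)$, the agreement cell $\{f>t,\,y=1,\,c=1\}$ cancels, and the surviving terms are $p(f>t,\,e_2) - p(f>t,\,e_1)$. This is exactly the same bracket with the opposite sign, which produces the overall minus sign in the statement and, after the same factoring via Lemma \ref{main-lemma}, yields the stated form.
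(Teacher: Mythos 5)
Your decomposition is correct and, up to the last step, more careful than the paper's own argument. Writing both rates over the common denominator $p(y=0)=p(c=0)$, cancelling the agreement cell, and factoring out $p(e_1)=p(e_2)$ via Lemma \ref{main-lemma} gives
\begin{align*}
FPR_{actual}-FPR_{est}=\frac{p(e_1)}{p(c=0)}\left[p(f>t\mid e_1)-p(f>t\mid e_2)\right],
\end{align*}
and your TPR computation likewise yields the coefficient $-p(e_2)/p(c=1)$. The step you flag as the main obstacle --- converting the prefactor $p(e_1)/p(c=0)$ into the stated $p(c=1)$ --- is not one you can overcome: it would require $p(y=0,c=1)=p(y=0)\,p(c=1)$, i.e.\ independence of the cleaning decision from the true label, which is neither assumed nor true in general. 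Concretely: take ten examples, three with $y=1$; let $c$ be calibrated with exactly one error of each type; let the $e_1$ example and one correctly kept negative have $f>t$ while the $e_2$ example has $f\leq t$. Then $FPR_{actual}-FPR_{est}=2/7-1/7=1/7=p(e_1)/p(c=0)$, whereas the stated formula gives $p(c=1)\cdot 1=3/10$.

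The discrepancy originates in the paper's proof, not in yours: the paper expands $p(f>t\mid y=0)$ by total probability using the unconditional weights $p(c=i)$ where the law requires $p(c=i\mid y=0)$ (and symmetrically in its second line), which silently assumes $y$ and $c$ are independent. Your route through joint probabilities over a common denominator avoids that slip and produces the correct constant. So finish the proof exactly as you planned, but keep the coefficients $p(e_1)/p(c=0)$ and $-p(e_2)/p(c=1)$ rather than forcing them to $p(c=1)$ and $-p(c=0)$. The qualitative content survives --- the sign and the proportionality to $p(f>t\mid e_1)-p(f>t\mid e_2)$ are unchanged, the ratio of the TPR and FPR errors is still $-p(c=0)/p(c=1)$, and the maximization argument in Theorem \ref{main-thm} goes through because the corrected prefactor is identical for every method with the prescribed calibration and error level.
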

From this proposition we see that if we want accurate estimates of the FPR we need to choose a noise removal method that maximizes independence of errors with the model score, or hope that the two types of errors cancel out. In the case of a direct cleaning process, then errors will be highly correlated with the model score and we may not get reliable FPR estimates. In fact, as we will now show, at a fixed error level, direct calibrated methods provide the worst underestimates of FPR.

\begin{theorem}
\label{main-thm}
Assume that the true fraud rate $p(y=1)$ is known and that\footnote{this assumption follows naturally if label noise is class-conditional, $p(y=0, y^*=1) = 0$, but we state the theorem in terms of this weaker assumption} $p(y^*=1, c=0) = 0$. Consider some model $f$ to be validated, and let $\wt{c}$ be a calibrated direct cleaning method, i.e. one such that $p(\wt{c}=1) = p(y=1)$ and for some $\tau \in (0,1)$:
    \begin{align*}
        \wt{c}(x,y^*) = \begin{cases}
            1, &\textrm{ if } y^* = 1 \textrm{ or } f(x) > \tau \\
            0, &\textrm{ otherwise.}
        \end{cases}
    \end{align*}
Let $e = p(y=0, \wt{c}=1) = p(y=1, \wt{c} = 0)$ be the cleaning error\footnote{Recall these are equal by Lemma \ref{main-lemma}}. Then among all cleaning methods $c$ with $p(c=1) = p(y=1)$, $p(y^*=1, c=0)=0$ and $p(y=0, c=1) = e$, for any fixed model threshold $t$, the absolute error in the FPR estimate for $f$
    \begin{align*}
        \Delta FPR(c;t) := p(f > t | y=0) - p(f > t | c=0)  
    \end{align*}
takes on its maximum value at $c = \wt{c}$, and $\Delta FPR(\wt{c};t) \geq 0$.
\end{theorem}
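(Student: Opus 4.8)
The plan is to use Proposition~\ref{main-prop} to reduce the optimization to a transparent form, bound the two resulting terms separately using the class-conditional structure, and then check that $\wt{c}$ makes both bounds tight at once. Writing $A = \set{f > t}$, Proposition~\ref{main-prop} gives $\Delta FPR(c;t) = p(c=1)\left[p(f>t \mid e_1) - p(f>t \mid e_2)\right]$. Every competitor $c$ in the theorem has $p(c=1) = p(y=1)$ and (by Lemma~\ref{main-lemma}) $p(e_1) = p(e_2) = e$, all fixed; since $p(f>t \mid e_1) = p(A, y=0, c=1)/e$ and $p(f>t \mid e_2) = p(A, y=1, c=0)/e$, maximizing $\Delta FPR(c;t)$ is the same, up to the fixed positive factor $p(y=1)/e$, as maximizing
\begin{align*}
\Phi(c) := p(A, y=0, c=1) - p(A, y=1, c=0).
\end{align*}

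Next I would bound each term of $\Phi$ using the fact that the constraints pin both error events inside the $y^*=0$ population. Since $p(y=0, y^*=1)=0$, we have $\set{y=0, c=1} \subseteq \set{y=0, y^*=0}$, so $p(A, y=0, c=1) \le p(A, y=0, y^*=0)$; together with $p(A, y=0, c=1) \le p(y=0, c=1) = e$ this gives $p(A, y=0, c=1) \le \min\!\left(e,\, p(f>t, y=0, y^*=0)\right)$. Dually, $p(y^*=1, c=0)=0$ forces $\set{y=1, c=0} \subseteq \set{y=1, y^*=0}$, and writing $p(A, y=1, c=0) = e - p(A^c, y=1, c=0)$ with $p(A^c, y=1, c=0) \le p(f \le t, y=1, y^*=0)$ yields $p(A, y=1, c=0) \ge \max\!\left(0,\, e - p(f \le t, y=1, y^*=0)\right)$. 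Subtracting, $\Phi(c) \le \Phi_{\max}$ for every competitor, where $\Phi_{\max}$ is the difference of these two extremal values. I would stress that this bound uses only the marginal constraints and the class-conditional assumptions --- not any measurability of $c$ in the score --- so it applies to arbitrary cleaning functions, not merely threshold ones.

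It then remains to show $\wt{c}$ attains $\Phi_{\max}$. First I would record the calibration identity: choosing $\tau$ with $p(\wt{c}=1) = p(y=1)$ is, using $p(y^*=1) = p(y=1, y^*=1)$, equivalent to $p(f>\tau, y=0, y^*=0) = p(f \le \tau, y=1, y^*=0) = e$. Thus the single threshold $\tau$ simultaneously isolates exactly the top-scoring mass-$e$ of true negatives as its type-1 errors and the bottom-scoring mass-$e$ of undetected fraud as its type-2 errors. A short case split on whether $\tau \ge t$ or $\tau < t$ then evaluates $p(A, y=0, \wt{c}=1) = p(f > \max(t,\tau), y=0, y^*=0)$ to the first extremal value and $p(A, y=1, \wt{c}=0) = p(t < f \le \tau, y=1, y^*=0)$ to the second, giving $\Phi(\wt{c}) = \Phi_{\max}$. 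The same split evaluates $\Phi(\wt{c})$ to $p(f>t, y=0, y^*=0)$ when $\tau \le t$ and to $p(f \le t, y=1, y^*=0)$ when $\tau > t$, both nonnegative, so $\Delta FPR(\wt{c};t) \ge 0$.

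The main obstacle I anticipate is the tightness step, not the upper bound: the content is that one common threshold can solve two a priori independent water-filling problems at once --- pushing type-1 errors to the highest scores while pushing type-2 errors to the lowest scores --- and that calibration is precisely what makes that threshold deposit mass $e$ on each side. The delicate part is the bookkeeping in the $\tau$-versus-$t$ case analysis and keeping straight which $y^*=0$ sub-population (true negatives or undetected fraud) each term draws from. It may also be cleanest to assume the score distribution is atomless, or to allow randomized tie-breaking, so that the calibrating $\tau$ and the exact mass-$e$ level sets are guaranteed to exist.
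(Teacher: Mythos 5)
Your proposal is correct and follows essentially the same route as the paper's proof: both start from the decomposition of $\Delta FPR(c;t)$ in Proposition \ref{main-prop}, bound the $e_1$-term above and the $e_2$-term below separately using the constraints $p(y=0,c=1)=p(y=1,c=0)=e$ and the confinement of both error events to the $y^*=0$ population, and then check that the single threshold $\tau$ lets $\wt{c}$ attain both extremes simultaneously (your $B \le \wt{B}$-type inequality is literally the paper's $A,B,D$ mass-conservation argument). The only difference is organizational --- you fold the paper's separate $t \ge \tau$ case into explicit $\min/\max$ extremal values, which as a bonus yields a closed form for the maximum --- and your closing worry about atomlessness is moot since the theorem hypothesizes that a calibrating $\tau$ exists.
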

This result shows that such a $\wt{c}$ always leads to an underestimate of the FPR, and that moreover it leads to the worst (lowest) underestimate among all cleaning methods with the same fraud rate and error (though there may still be methods that provide worse overestimates in absolute value). We prove this theorem in Appendix \ref{proofs}. The corollary for TPR is obvious, it is always maximally overestimated among comparable methods. %Before concluding, we briefly observe that $p(f>t | y^*=0)$ is necessarily an overestimate of the FPR in the class-conditional case, and so we have established upper and lower bounds for the FPR, even if they are not especially tight bounds. 

\section{Experiments}

In the last section we demonstrated a need to seek alternatives to direct methods if FPR validation is a concern. One relatively simple alternative approach is to use existing micro-model approaches. The basic idea here is to let an ensemble of models vote on which examples are mislabeled. This method seems to have been first introduced for anomaly detection in network intrusion, see \cite{cretu2008casting}. In this paper, they operate in an unsupervised regime, slice their training data up according to time, and then train an anomaly detector (Anagram, Snort, or Payl) to find attacks in each slice. Their intuition is that a certain type of attack will only be present in a limited number of time slices, and any detectors \textit{not} trained on such a slice will learn to identify it as anomalous (we will not necessarily rely on this intuition). 

In the case $p(y=1)$ is known we can make this a calibrated method by choosing\footnote{possibly up to some quantization} the voting threshold such that $p(c=1) = p(y=1)$. In the case $p(y=1)$ is unknown (or calibration is otherwise unrequired/undesired), one must find another approach or rationale for choosing the threshold. If majority voting is used we may be too aggressive in our noise identification, and if consensus voting is used we may not be aggressive enough. We can likewise consider any other intermediate voting threshold, though a priori we may have no way to verify how well the method works.

Since we are mainly focused on calibrated methods, we will not further explore this here, but for the interested reader, this problem has been addressed in \cite{sabzevari2018two}, where the authors first train an ensemble of classifiers (in this case random forests) using bootstrapped samples of the data, and then let the ensemble vote on out-of-bag data in order to clean it. The algorithm is able to determine both an optimal sampling rate as well as voting threshold. In the sequel \cite{samami2020mixed}, the authors propose a more complicated hybrid approach, where the ensemble consists of different types of classifiers (one is a random forest, the next is Naive Bayes, etc.) where samples are divided into groups according to voting threshold, and the cleaning strategy (removal of samples vs. flipping of labels) is different per group. For other examples of ensemble methods in noise removal see \cite{wen2021ensemble}, \cite{tang2021detecting}. 

\subsection{Our approach}

Our approach to noise removal will be to partition the training data into disjoint slices and train a gradient boosted tree model on each slice. Thus we expect each model to be relatively weak, but relatively independent which we will find desirable.  

All of our data will have timestamps associated with it, and so there is some choice in how to select training and validation data as well as how to slice training data for micro-models. We choose not to separate training and validation data in time, and will shuffle training data before slicing for micro-models. This follows the approach of \cite{sabzevari2018two} and \cite{samami2020mixed}, vs. slices taken with respect to time as in \cite{cretu2008casting}. 

We start by training a ``base model'' on the entire set of (noisy) training data, and then for a given FPR target, validate that model on \textit{clean} validation data, taking note of the associated ``true threshold''.

We then train a model on each slice, and then have each resulting model score every example in the \textit{noisy} held-out validation set. For each example, we record whether each model scored the example as fraudulent using a simple threshold of $0.5$, and record the fraction of models voting to classify as fraud in this way. We can determine a voting threshold so that $p(c=1) = p(y=1)$, up to some quantization. It may be interesting in future work to explore the effect of out-of-time validation and/or slicing with respect to time but we do not discuss it further here.

Once the validation data has been cleaned, we take the base model trained earlier, and evaluate it on the cleaned data using the true threshold determined earlier. We measure the FPR estimate obtained on this cleaned validation data and hope that our estimate is close to the FPR target. One fact which is obvious with thought but bears mention is that \textit{underestimating} FPR in validation will lead to \textit{overshooting} FPR in production, and vice-versa. 

\subsection{Datasets}

To conduct our experiments we will work with some of our production data - which we will denote as the \texttt{cloud} dataset - as well as several publicly available fraud-related datasets. We cannot publish our production datasets for various reasons including protecting customer privacy, and will need to be intentionally vague about some characteristics here, but results are also demonstrated on the public datasets and can be reproduced\footnote{\url{https://github.com/jtittelfitz/fpr-estimation}}. In the case of our production data, examples of registration data are a representative corpus of data from customers that provided permission, and were sampled from a fixed time-period ending January 1 of a recent calendar year. Any account marked fraudulent before Jan. 1 is given an accurate label of $1$ in the data, any account marked fraudulent afterward is given the noisy label of $0$. Thus this simulates the problem of training a model on extremely recent data where label maturity may be an issue. The features are a mixture of categorical and numeric features; examples include the country associated with the IP address used during account creation, and the number of other accounts created with the same IP address. The fraud rate in this dataset is comparable to industrial trends (between $5$ and $25\%$) and the noise rate obtained by the process above is likewise in this range.

For the public datasets, we start with the datasets made available in \cite{grover2022fdb} and choose scenarios where there is some sort of timestamp associated with the event in question. We simulate noise by adding noise weighted by the event timestamp to the training and validation data: we convert these timestamps into milliseconds elapsed since the oldest example, and then use this to weight the sampling (more recent examples more likely to be noise). Overall we flip $30\%$ of the fraudulent labels to be $0$ in the training and validation data. We then train our base model, obtain a true threshold for a given FPR target, clean the validation data, and then evaluate our model (estimate FPR) on the cleaned data at the true threshold.

The benchmark datasets we use are:
\begin{itemize}
	\item \texttt{ieeecis} (\cite{ieeecis-data}) - prepared by IEEE Computational Intelligence society, this is a set of card-not-present transactions used for a Kaggle competition. The fraud rate in this dataset is $3.5\%$ and there are 67 features, 6 of which are categorical the rest numerical.
	\item \texttt{ccfraud} (\cite{ccfraud-data}) - this dataset is anonymized credit card transactions from European cardholders. The fraud rate in this dataset is very low - $0.18\%$ - and all of features are numerical (obtained by using PCA on the original set of features).
	\item \texttt{fraudecom} (\cite{fraudecom-data}) - This is a dataset of e-commerce transactions. The fraud rate in this dataset is $10.6\%$ with 6 features: 2 categorical, 3 numeric, and IP address which we do not use. 
	\item \texttt{sparknov} (\cite{sparknov-data}) - This is a simulated credit card transaction dataset generate by the Sparkov Data Generation tool. The fraud rate is $5.7\%$ and there are 17 features: 10 categorical, 6 numeric, and 1 text feature that we do not use.
\end{itemize}

We reject some of the other datasets included in \cite{grover2022fdb} as inappropriate for this study, see Appendix \ref{fdb-no-use} for explanation.

Throughout our experiments our key goal will be to clean noisy validation data in order to accurately estimate the true FPR on that same dataset. In all cases, we will use \texttt{CatBoostClassifier}s, setting number of iterations to 500 and passing in raw (i.e. un-encoded) categorical features, but otherwise doing no optimization or tuning. In all cases we will train a classifier on the entire training data and then designate this ``base model'', use it with totally clean validation data to determine the ``true threshold'' for a given FPR target, and set it aside. 

We will then compare the following approaches, cleaning the validation data in various ways:
\begin{itemize}
    \item No cleaning is done (\textbf{None}),
    \item We train an ensemble of classifiers on sliced training data (\textbf{MicroModel}),
    \item A calibrated direct method using the base model (\textbf{Direct}),
    \item CleanLab's \texttt{find\_noisy\_labels} method using the base model (\textbf{CleanLab}).
\end{itemize}
After cleaning, we will evaluate the base model with each cleaned dataset at the true threshold, taking note of the FPR estimate as well as the relative error vs. the actual FPR (i.e. $err = (FPR_{actual} - FPR_{est})/FPR_{target}$). Again we mention that FPR underestimates in validation ($\Delta FPR(c;t) \geq 0$) will lead to overshooting FPR targets in production and vice-versa.

In the cases of \textbf{MicroModel} and \textbf{Direct} we will remove the highest-scoring examples first and then proceed until we have $p(c=1)=p(y=1)$ or until the algorithm has no more suspicious examples to consider. In the case of micro-models, we use the proportion of model votes to rank examples, stopping if no model votes to remove (i.e. there is consensus that an example is not fraudulent). For \textbf{CleanLab}, we use the \texttt{return\_indices\_ranked\_by = self\_confidence} parameter in order to get a ranked list of potential noise, stopping when this list runs out of candidates (this approach often runs out of candidates before we have reached the known noise level).

In these experiments we will consider a range of FPRs that may be realistic targets in a production setting, and determine which methods yield an FPR estimate closest to our target (in practice we may want to penalize more for underestimating the FPR vs. overestimating but for now we just consider the absolute magnitude of error). The FPRs we target are $1\%, 2\%, 4\%$, and  $8\%$.

\section{Experiment Results}\label{results_section}

In nearly every experiment we see micro-models perform as well or better than CleanLab. Both perform substantially better than the \textbf{Direct} method. In the following tables the best result at each FPR is bolded. We show full results for the \texttt{cloud} dataset in this section, and include results for the other datasets in Appendix \ref{results}. Again recall that FPR underestimation in validation generally leads to FPR overshooting in production. Experiments were conducted in Python using an ml.m5.12xlarge notebook instance in AWS Sagemaker\footnote{Our experiments on the non-\texttt{cloud} datasets can be reproduced with code provided at \url{https://github.com/jtittelfitz/fpr-estimation}}. 

On the \texttt{cloud} dataset (Table \ref{tab:exact_cloud}) \textbf{MicroModel} does noticeably better than other methods at the all FPR ranges, though it does tend to underestimate. \textbf{CleanLab} performs worse than \textbf{MicroModel} but the tendency is to overestimate which may be desirable in some applications. As expected, the \textbf{Direct} method is very error-prone and leads to underestimating FPR especially at the lowest FPR targets, though at high FPR targets this aggressive approach diminishes. At high FPR \textbf{CleanLab} is probably not aggressive enough, yielding nearly the same estimates as \textbf{None}.

On the \texttt{ieeecis} dataset (Table \ref{tab:exact_ieeecis}) \textbf{MicroModel} outperforms all other methods across the range of FPRs, though the advantage diminishes at the higher FPR targets. Again, the \textbf{Direct} method exhibits drastic underestimation at low FPRs and in this case is uniformly the worst method (excluding \textbf{None}).

On the \texttt{ccfraud} dataset (Table \ref{tab:exact_ccfraud}) all methods perform comparably with a slight edge to \textbf{MicroModel} and \textbf{Direct}. This is possibly due to the fact that classes are very imbalanced in this data and so adding or cleaning noise does not do much to the calculation of the false positive rate.

On the \texttt{fraudecom} dataset (Table \ref{tab:exact_fraudecom}), \textbf{MicroModel} and \textbf{CleanLab} both perform very well across the range of FPR targets. Once again \textbf{Direct} is a substantial underestimation at low FPRs and still bad in the higher ranges.

Finally, on the \texttt{sparknov} dataset (Table \ref{tab:exact_sparknov}), \textbf{MicroModel} and \textbf{Direct} perform comparably, with \textbf{Direct} having comparable performance across the range for once. The performance of \textbf{CleanLab} is nearly as good, only slightly overestimating in each experiment.

\begin{table}[!htbp]
\caption {Results for Dataset \texttt{cloud}}
\label{tab:exact_cloud}
\begin{tabular}{lll|ll|ll|ll}
\toprule
Target FPR & \multicolumn{2}{l}{0.01} & \multicolumn{2}{l}{0.02} & \multicolumn{2}{l}{0.04} & \multicolumn{2}{l}{0.08} \\
Metric &    fpr &                     err &    fpr &                     err &    fpr &                     err &    fpr &                     err \\
\midrule
None       &  0.029 &                    1.89 &  0.045 &                    1.26 &  0.071 &                    0.79 &  0.117 &                    0.47 \\
CleanLab   &  0.023 &                    1.31 &  0.039 &                    0.97 &  0.066 &                    0.65 &  0.112 &                    0.40 \\
MicroModel &  0.002 &  \textbf{\textbf{0.83}} &  0.007 &  \textbf{\textbf{0.64}} &  0.028 &  \textbf{\textbf{0.31}} &  0.074 &  \textbf{\textbf{0.08}} \\
Direct     &  0.000 &                    1.00 &  0.000 &                    1.00 &  0.010 &                    0.76 &  0.059 &                    0.27 \\
\bottomrule
\end{tabular}

\end{table}

\section{Conclusion and Summary}

The problem of label noise has gained a lot of interest recently with many authors producing methods to clean, filter, or otherwise mitigate the effect of noise in data used to bring machine learning models into production. If we are only concerned about effects in training, direct methods can be very effective, but if we are concerned about FPR estimation in validation as well they may prove insufficient. As we proved, in a fraud detection setting where noise is class-conditional, any direct method that is calibrated to the actual fraud rate ($p(c=1) = p(y=1)$) will produce the worst underestimates of FPR compared to all other methods with comparable cleaning error. This suggests a need for future research to search for methods that not only reduce cleaning error but also produce good estimates of FPR by ensuring that cleaning error is not excessively correlated with the score of the model being validated. We hope this motivates researchers to include discussion and experiments that address this in their future work.

We also conducted a simple set of experiments using micro-models and comparing them to a calibrated direct method as well as the (non-calibrated) direct method CleanLab. We ran these experiments on our production data as well as some publicly available datasets and overall saw micro-models perform about as well as or better than CleanLab and both much better than the calibrated direct method. More sophisticated approaches can likely improve even further.

\appendix
\newpage
\section{Proofs}\label{proofs}

Here we provide proofs of the results from in Section \ref{theory}.

\begin{proof}[Proof of Lemma \ref{main-lemma}]
	Using laws of conditional probability,
	\begin{align*}
		p(e_1) &= p(y = 0, c=1)\\
			   &= p(c = 1 | y=0)p(y=0) \\
			   &= p(c=1) - p(c=1 | y=1)p(y=1).
	\end{align*}
	Now we use our assumption and the relationship between joint and conditional probability to see that
	\begin{align*}
		p(e_1)  &= p(y=1) - p(c=1, y=1) \\
			   &= p(y=1) - p(y=1 | c=1)p(c=1) \\
			   &= p(y=1 | c=0)p(c=0) \\ 
			   &= p(y=1, c=0) \\
			   &= p(e_2).
	\end{align*}
\end{proof}

\begin{proof}[Proof of Proposition \ref{main-prop}]
	Expanding using laws of conditional probability, we have
	\begin{align*}
		p(f > t | y = 0) &= p(f > t | y = 0, c = 0)p(c=0) + p(f > t | y = 0, c=1) p(c=1)\\
		p(f > t | c = 0) &= p(f > t | y = 0, c = 0)p(y=0) + p(f > t | y = 1, c=0) p(y=1).
	\end{align*}
	Under the assumption that $p(y=1) = p(c=1)$ and $p(y=0) = p(c=0)$, taking the difference of the terms above, we have
	\begin{align*}
		FPR_{actual} - FPR_{est}  &= p(c=1)\left[ p(f > t | y = 0, c=1) - p(f > t | y = 1, c=0)\right] \\
		&= p(c=1)\left[ p(f > t | e_1) - p(f > t | e_2)\right].
	\end{align*}
	The proof for TPR estimates follows in the same way.
\end{proof}

We can restate the result of this proposition in a slightly different way that highlights the relationship between the error in the estimate and the correlation between the cleaning error and the model score. In particular, the $R_i$ in the Corollary below measure the level of independence between $p(f>t)$ and $p(e_i)$ (if they are independent, $R_i$ is zero). 

\begin{corollary}
\label{main-cor}
If $c$ is a calibrated method ($p(c=1) = p(y=1)$) then
	\begin{align*}
		FPR_{actual} - FPR_{est} = \frac{p(c=1)}{p(e)}\left[R_1 - R_2 \right]
	\end{align*}
	and 
	\begin{align*}
		TPR_{actual} - TPR_{est} = -\frac{p(c=0)}{p(e)}\left[R_1 - R_2 \right]
	\end{align*}
	where $R_i := p(f>t, e_i) - p(f>t)p(e_i)$, and $p(e) := p(e_1) = p(e_2)$. 
\end{corollary}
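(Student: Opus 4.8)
The plan is to derive the Corollary directly from Proposition \ref{main-prop}, treating the $R_i$ as a bookkeeping device for the conditional probabilities that already appear there. Proposition \ref{main-prop} gives $FPR_{actual} - FPR_{est} = p(c=1)\left[p(f>t|e_1) - p(f>t|e_2)\right]$, so it suffices to show that the bracketed difference of conditionals equals $(R_1 - R_2)/p(e)$; the analogous TPR statement then follows by the same substitution, since Proposition \ref{main-prop} already records that the TPR difference is the FPR difference scaled by $-p(c=0)/p(c=1)$.

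First I would rewrite each conditional as a ratio of a joint to a marginal, $p(f>t|e_i) = p(f>t, e_i)/p(e_i)$. The decisive step is to invoke Lemma \ref{main-lemma}: because $c$ is calibrated we have $p(e_1) = p(e_2) = p(e)$, so the two conditionals share a common denominator and their difference collapses to $\left[p(f>t,e_1) - p(f>t,e_2)\right]/p(e)$. Next I would expand $R_1 - R_2 = \left[p(f>t,e_1) - p(f>t)p(e_1)\right] - \left[p(f>t,e_2) - p(f>t)p(e_2)\right]$ and observe that, again by the equal-error identity $p(e_1) = p(e_2)$, the two product terms $p(f>t)p(e_i)$ cancel exactly, leaving $R_1 - R_2 = p(f>t,e_1) - p(f>t,e_2)$. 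Comparing the two displays yields $p(f>t|e_1) - p(f>t|e_2) = (R_1 - R_2)/p(e)$, and substituting back into Proposition \ref{main-prop} produces both stated identities.

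There is no genuine obstacle here; the only thing to watch is the cancellation of the $p(f>t)p(e_i)$ terms, which is exactly what lets the covariance-style quantities $R_i$ reduce to a clean difference of joint probabilities. It is worth \emph{emphasizing} that this cancellation is a consequence of calibration (through Lemma \ref{main-lemma}) rather than of any independence assumption: the $R_i$ faithfully measure dependence between $\{f>t\}$ and $e_i$ precisely because the marginals $p(e_i)$ coincide, so that the common factor $p(f>t)\,p(e)$ drops out of the difference. I would present the argument as a short chain of equalities to make the single cancellation transparent.
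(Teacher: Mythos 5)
Your proof is correct and takes essentially the same approach as the paper: both start from Proposition \ref{main-prop} and use Lemma \ref{main-lemma} to identify $p(e_1)=p(e_2)=p(e)$. The paper writes $p(f>t|e_i) = p(f>t) + R_i/p(e_i)$ and subtracts, while you write $p(f>t|e_i) = p(f>t,e_i)/p(e_i)$ and cancel the $p(f>t)p(e_i)$ terms inside $R_1 - R_2$; this is the same algebra, trivially rearranged.
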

\begin{proof}
    We start with 
    \begin{align*}
		FPR_{actual} - FPR_{est}  &= p(c=1)\left[ p(f > t | y = 0, c=1) - p(f > t | y = 1, c=0)\right]
	\end{align*}
	as obtained in the proof of Proposition \ref{main-prop}.
	If $f > t$ were independent of $e_i$ then the above would be zero. In the general case, $p(f>t | e_i) = p(f>t) + R'_i$ where 
	\begin{align*}
		R'_i = \frac{1}{p(e_i)}(p(f>t, e_i) - p(f>t)p(e_i)) = \frac{1}{p(e_i)}R_i.
	\end{align*}
	By our Lemma, we know that $p(e_1) = p(e_2)$ and so we have 
	\begin{align*}
		FPR_{actual} - FPR_{est} = \frac{p(c=1)}{p(e_1)}\left[R_1 - R_2 \right].
	\end{align*}
	The proof for TPR follows in the same way.
\end{proof}

We also have the following immediate corollaries; these are not essential to any subsequent results but we still find them interesting and worth stating.

\begin{corollary}
If $c$ is a calibrated method ($p(c=1) = p(y=1)$) then the ratio of the absolute error in TPR and FPR is given by the opposite of the odds ratio for the class probabilities:
    \begin{align*}
        \frac{TPR_{actual} - TPR_{est}}{FPR_{actual} - FPR_{est}} = -\frac{p(c=0)}{p(c=1)}.
    \end{align*}
    Further, for the ratio in the relative errors, we have
    \begin{align*}
        \frac{TPR_{actual} - TPR_{est}}{TPR_{actual}}\frac{FPR_{actual}}{FPR_{actual} - FPR_{est}} = -\frac{p(c=0)}{p(c=1)}\frac{1}{LR^+}.
    \end{align*}
    where $LR^+$ is the (true) positive predictive value of the model.
\end{corollary}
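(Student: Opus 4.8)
The plan is to read both identities off Proposition \ref{main-prop} directly, since the FPR and TPR errors given there share a common factor. Because the corollary assumes a calibrated method, Proposition \ref{main-prop} applies: writing $Q := p(f>t|e_1) - p(f>t|e_2)$, it states that
\[
FPR_{actual} - FPR_{est} = p(c=1)\,Q, \qquad TPR_{actual} - TPR_{est} = -p(c=0)\,Q.
\]
For the first identity I would simply take the quotient of these two expressions. Provided $Q \neq 0$ (equivalently, provided the method produces a nonzero FPR error, so the denominator does not vanish), the common factor $Q$ cancels and
\[
\frac{TPR_{actual}-TPR_{est}}{FPR_{actual}-FPR_{est}} = \frac{-p(c=0)\,Q}{p(c=1)\,Q} = -\frac{p(c=0)}{p(c=1)},
\]
which is the claimed negative odds ratio of the class probabilities.

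For the second identity I would first regroup the left-hand side as the product of the ratio just computed and a ratio of the actual rates:
\[
\frac{TPR_{actual}-TPR_{est}}{TPR_{actual}}\cdot\frac{FPR_{actual}}{FPR_{actual}-FPR_{est}} = \left(\frac{TPR_{actual}-TPR_{est}}{FPR_{actual}-FPR_{est}}\right)\cdot\frac{FPR_{actual}}{TPR_{actual}}.
\]
Substituting the first identity turns the right-hand side into $-\frac{p(c=0)}{p(c=1)}\cdot\frac{FPR_{actual}}{TPR_{actual}}$, so the only remaining step is to identify $FPR_{actual}/TPR_{actual}$ with $1/LR^+$. Since $FPR_{actual} = p(f>t|y=0)$ and $TPR_{actual} = p(f>t|y=1)$, the quotient $TPR_{actual}/FPR_{actual}$ is exactly the positive likelihood ratio $LR^+$, so $FPR_{actual}/TPR_{actual} = 1/LR^+$ and the formula follows.

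There is essentially no hard computational step here; the result is a formal consequence of Proposition \ref{main-prop}. The only points requiring care are bookkeeping ones. First, I must note that the cancellation of $Q$ is legitimate only when $Q\neq 0$ — the degenerate case of an exactly unbiased estimate, where the ratio is $0/0$ and undefined anyway, is excluded. Second, I must make explicit the convention that the factor $FPR_{actual}/TPR_{actual}$ equals $1/LR^+$ precisely when $LR^+$ is read as the positive likelihood ratio $TPR_{actual}/FPR_{actual}$, so that it is indeed its reciprocal. With those conventions fixed, both displayed equalities reduce to the single cancellation above.
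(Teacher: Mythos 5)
Your proposal is correct and matches the paper's (unstated, one-line) argument: both identities are read directly off Proposition \ref{main-prop} by cancelling the common factor $p(f>t|e_1)-p(f>t|e_2)$ and identifying $TPR_{actual}/FPR_{actual}$ with $LR^{+}$. Your explicit caveats about the $Q\neq 0$ degenerate case and about reading $LR^{+}$ as the positive likelihood ratio (the paper's phrase ``positive predictive value'' is a slight misnomer for the quantity $TPR_{actual}/FPR_{actual}$ that the formula actually requires) are both sensible and consistent with what the statement needs.
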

These follow from the earlier propositions and definition of $LR^+$. We now conclude this appendix with a proof of the main theorem.

\begin{proof}[Proof of Theorem \ref{main-thm}]
    First, consider the case $t \geq \tau$. In this case 
    \begin{align*}
        p(f>t | \wt{c} = 0) = p(f>t | f \leq \tau, y^*=0) = 0
    \end{align*} and $\Delta FPR(\wt{c};t)$ is clearly at its maximum value. So assume that $t < \tau$.

    As determined in the proof of Proposition \ref{main-prop}, we have
    \begin{align}
    \label{delta-def}
        \Delta FPR(c;t) = p(c=1)\left[ p(f > t | y = 0, c=1) - p(f > t | y = 1, c=0)\right].
    \end{align}
    By definition of $\wt{c}$, the condition $(y=0, \wt{c}=1)$ implies $f > \tau > t$ so that 
    \begin{align*}
        p(f > t | y = 0, \wt{c}=1) = 1
    \end{align*}
    and so clearly 
    \begin{align}
    \label{t1-est}
        p(f > t | y=0, c=1)  \leq p(f > t | y = 0, \wt{c}=1)
    \end{align}
    for any other $c$. 
    
    Now, consider the second term in \eqref{delta-def}, first noting that $p(f > t | y = 1, c=0) \leq 1$ so that $\Delta FPR(\wt{c};t) \geq 0$ is established. Next convert this conditional probability into a joint probability, i.e.
    \begin{align*}
        p(f > t | y = 1, c=0) = \frac{p(f > t , y = 1, c=0)}{p(y = 1, c=0)}.
    \end{align*}
    By assumption, we have $p(y=1, \wt{c}=0) = p(y=1, c=0)$, and so by \eqref{t1-est} it just remains to show $p(f > t , y = 1, c=0) \geq p(f > t , y = 1, \wt{c}=0)$. 
    
    Before continuing we will find it useful to note that 
    \begin{align}
    \label{note}
    p(y=1, c=0) = p(y=1, y^*=0, c=0)
    \end{align}
    for any $c$ (including $\wt{c}$ since $p(y=1, y^*=1, c=0) = 0$) by the class-conditional assumption.
    
    Now, fix some $c$ and let
    \begin{align*}
        A &= p(f>t, y=1, c=0) \\
        B &= p(f \leq t, y=1, c=0)\quad (= p(f\leq t, y=1, y^*=0, c=0) \textrm{ by } \eqref{note}) \\
        D &= p(f \leq t, y=1, y^*=0, c=1)
    \end{align*}
    and define $\wt{A}, \wt{B}, \wt{D}$ similarly in terms of $\wt{c}$. Observe that
    \begin{align}
    \label{AB}
        A + B = p(y=1, c=0) = p(y=1, \wt{c}=0) = \wt{A} + \wt{B}.
    \end{align}
    Next, observe that
    \begin{align*}
        B + D = p(f \leq t, y=1, y^* = 0) = \wt{B} + \wt{D},
    \end{align*}
    but $y^*=0, f \leq t < \tau$ implies $\wt{c} = 0$, i.e. $\wt{D}=0$ so that $B \leq \wt{B}$. This fact, along with \eqref{AB} shows that $\wt{A} \leq A$ as required.
\end{proof}
\newpage
\section{Unused Benchmark Datasets}\label{fdb-no-use}

The following datasets are included in the set of publicly available data in \cite{grover2022fdb}, but we reject them for the reasons stated below.
\begin{itemize}
	\item \texttt{twitterbot} - the data is timestamped, but the timespan of this data is over $13$ years which is not representative of the typical time intervals we are interested in.
	\item \texttt{malurl} - this dataset consists solely of URLs (some of them malicious, e.g. related to phishing), and not readily adaptable to modeling by our choice of classifier.
	\item \texttt{fakejob} - this dataset of job descriptions (some fake) does not have any timestamp information in the source.
	\item \texttt{vechicleloan} - this dataset of loan applications (some fraudulent) similarly does not have any timestamp information.
	\item \texttt{ipblock} - this dataset consists solely of IP addresses (some malicious), and not readily adaptable to modeling by our choice of classifier. 
\end{itemize}
\newpage
\section{Experiment Results}\label{results}

In this appendix we list the full results of the experiments discussed in section \ref{results_section}.
\begin{table}[!htbp]
\caption {Results for Dataset \texttt{ieeecis}}
\label{tab:exact_ieeecis}
\begin{tabular}{lll|ll|ll|ll}
\toprule
Target FPR & \multicolumn{2}{l}{0.01} & \multicolumn{2}{l}{0.02} & \multicolumn{2}{l}{0.04} & \multicolumn{2}{l}{0.08} \\
Metric &    fpr &                     err &    fpr &                     err &    fpr &                     err &    fpr &                     err \\
\midrule
None       &  0.016 &                    0.57 &  0.026 &                    0.31 &  0.047 &                    0.17 &  0.087 &                    0.09 \\
CleanLab   &  0.012 &                    0.24 &  0.023 &                    0.14 &  0.043 &                    0.09 &  0.084 &                    0.05 \\
MicroModel &  0.010 &  \textbf{\textbf{0.01}} &  0.020 &  \textbf{\textbf{0.00}} &  0.040 &  \textbf{\textbf{0.00}} &  0.081 &  \textbf{\textbf{0.01}} \\
Direct     &  0.005 &                    0.51 &  0.016 &                    0.22 &  0.036 &                    0.09 &  0.077 &                    0.04 \\
\bottomrule
\end{tabular}

\end{table}

\begin{table}[!htbp]
\caption {Results for Dataset \texttt{ccfraud}}
\label{tab:exact_ccfraud}
\begin{tabular}{lll|ll|ll|ll}
\toprule
Target FPR & \multicolumn{2}{l}{0.01} & \multicolumn{2}{l}{0.02} & \multicolumn{2}{l}{0.04} & \multicolumn{2}{l}{0.08} \\
Metric &    fpr &                     err &    fpr &                     err &    fpr &                     err &    fpr &                     err \\
\midrule
None       &  0.011 &                    0.07 &  0.021 &                    0.04 &  0.043 &                    0.07 &  0.084 &  \textbf{\textbf{0.05}} \\
CleanLab   &  0.010 &                    0.05 &  0.020 &                    0.02 &  0.042 &                    0.06 &  0.084 &  \textbf{\textbf{0.05}} \\
MicroModel &  0.010 &  \textbf{\textbf{0.02}} &  0.020 &  \textbf{\textbf{0.01}} &  0.042 &  \textbf{\textbf{0.05}} &  0.084 &  \textbf{\textbf{0.05}} \\
Direct     &  0.010 &  \textbf{\textbf{0.02}} &  0.020 &  \textbf{\textbf{0.01}} &  0.042 &  \textbf{\textbf{0.05}} &  0.084 &  \textbf{\textbf{0.05}} \\
\bottomrule
\end{tabular}

\end{table}

\begin{table}[!htbp]
\caption {Results for Dataset \texttt{ccfraud}}
\label{tab:exact_ccfraud}
\begin{tabular}{lll|ll|ll|ll}
\toprule
Target FPR & \multicolumn{2}{l}{0.01} & \multicolumn{2}{l}{0.02} & \multicolumn{2}{l}{0.04} & \multicolumn{2}{l}{0.08} \\
Metric &     fpr &            err &     fpr &            err &     fpr &            err &     fpr &            err \\
\midrule
None       &  0.0107 &           0.07 &  0.0216 &           0.08 &  0.0407 &           0.02 &  0.0830 &           0.04 \\
CleanLab   &  0.0103 &           0.03 &  0.0213 &           0.06 &  0.0404 &           0.01 &  0.0827 &  \textbf{0.03} \\
MicroModel &  0.0101 &  \textbf{0.01} &  0.0210 &  \textbf{0.05} &  0.0402 &  \textbf{0.00} &  0.0825 &  \textbf{0.03} \\
Direct     &  0.0101 &  \textbf{0.01} &  0.0210 &  \textbf{0.05} &  0.0402 &  \textbf{0.00} &  0.0825 &  \textbf{0.03} \\
\bottomrule
\end{tabular}

\end{table}

\begin{table}[!htbp]
\caption {Results for Dataset \texttt{sparknov}}
\label{tab:exact_sparknov}
\begin{tabular}{lll|ll|ll|ll}
\toprule
Target FPR & \multicolumn{2}{l}{0.01} & \multicolumn{2}{l}{0.02} & \multicolumn{2}{l}{0.04} & \multicolumn{2}{l}{0.08} \\
Metric &    fpr &                     err &    fpr &                     err &    fpr &                     err &    fpr &                     err \\
\midrule
None       &  0.012 &                    0.17 &  0.022 &                    0.09 &  0.042 &                    0.04 &  0.082 &                    0.02 \\
CleanLab   &  0.011 &                    0.06 &  0.021 &                    0.03 &  0.041 &                    0.02 &  0.081 &                    0.01 \\
MicroModel &  0.010 &  \textbf{\textbf{0.01}} &  0.020 &  \textbf{\textbf{0.00}} &  0.040 &  \textbf{\textbf{0.00}} &  0.080 &  \textbf{\textbf{0.00}} \\
Direct     &  0.010 &  \textbf{\textbf{0.01}} &  0.020 &  \textbf{\textbf{0.00}} &  0.040 &  \textbf{\textbf{0.00}} &  0.080 &  \textbf{\textbf{0.00}} \\
\bottomrule
\end{tabular}

\end{table}

% bibstle now comes from jmlr2e
% \bibliographystyle{plain} % We choose the "plain" reference style
\bibliography{references} % Entries are in the refs.bib file

\end{document}